\newtheorem{theorem}{Theorem}[section]
\newtheorem{corollary}{Corollary}[section]
\newcommand{\RomanOne}{\uppercase\expandafter{\romannumeral 1} }
\newcommand{\RomanTwo}{\uppercase\expandafter{\romannumeral 2} }
\newcommand{\RomanThree}{\uppercase\expandafter{\romannumeral 3} }
\newcommand{\RomanFour}{\uppercase\expandafter{\romannumeral 4} }
\definecolor{cvprblue}{rgb}{0.21,0.49,0.74}
\def\eqref#1{equation~\ref{#1}}
\def\1{\bm{1}}
\DeclareMathAlphabet{\mathsfit}{\encodingdefault}{\sfdefault}{m}{sl}
\SetMathAlphabet{\mathsfit}{bold}{\encodingdefault}{\sfdefault}{bx}{n}
\title{Unifying Two Types of Scaling Laws from the Perspective of Conditional Kolmogorov Complexity}
\author{\bf Jun Wan\\
UBS AG\\
jun.wan@ubs.com
}
\begin{document}
\maketitle
{
\renewcommand{\thefootnote}{\fnsymbol{footnote}}
\footnotetext{Pre-print with preliminary results, work in progress.}
}
\begin{abstract}

In 2020, OpenAI proposed the first type of Scaling Laws, 
describing the relationships between model loss and the scale of parameters, 
data, and training computation. 
In 2024, OpenAI proposed the second type of Scaling Laws, 
describing the relationship between model inference performance and inference computation. 
In this paper, we analyze LLMs training and inference processes from the perspective 
of lossless compression using conditional Kolmogorov complexity, 
and unify these two types of Scaling Laws. We find that both types of 
Scaling Laws improve approximation of conditional Kolmogorov complexity 
by increasing execution steps of Turing machine. 
The first type of Scaling Laws increases execution steps by increasing number 
of model parameters. The second type of Scaling Laws increases execution steps by 
increasing the number of intermediate tokens.
    
\end{abstract}

\section{Introduction}
\label{sec:intro}

In October 2022, OpenAI released ChatGPT 3.5 to the public, demonstrating the success of the first type of Scaling Laws proposed in 2020 \cite{1}. 
In September 2024, OpenAI released o1-preview and introduced test-time compute Scaling Laws \cite{2}. 
Subsequently, the release of DeepSeek-R1 \cite{27} by DeepSeek further validated the effectiveness of test-time compute Scaling Laws.
The success of these models highlights the significant impact of both types of Scaling Laws.

\citet{3} has long advocated that "a model that compresses well generalizes well". 
\citet{4} views Large Language Models (LLMs) as powerful lossless compressors. 
In this paper, we also analyze LLMs from the perspective of lossless compression. 
However, we differ by adopting the approach from NNCP \cite{5}, viewing the training process of LLMs as a form of lossless compression, where the model is trained solely on the data stream to be compressed.

In the realm of theoretical research, the analysis of lossless compression based on Kolmogorov complexity is a prevalent technical approach \cite{24}. Employing conditional Kolmogorov complexity as a theoretical tool, we systematically investigate the training and inference processes of LLMs.

\begin{figure}[ht]
    \centering
    \includegraphics[width=0.45\textwidth]{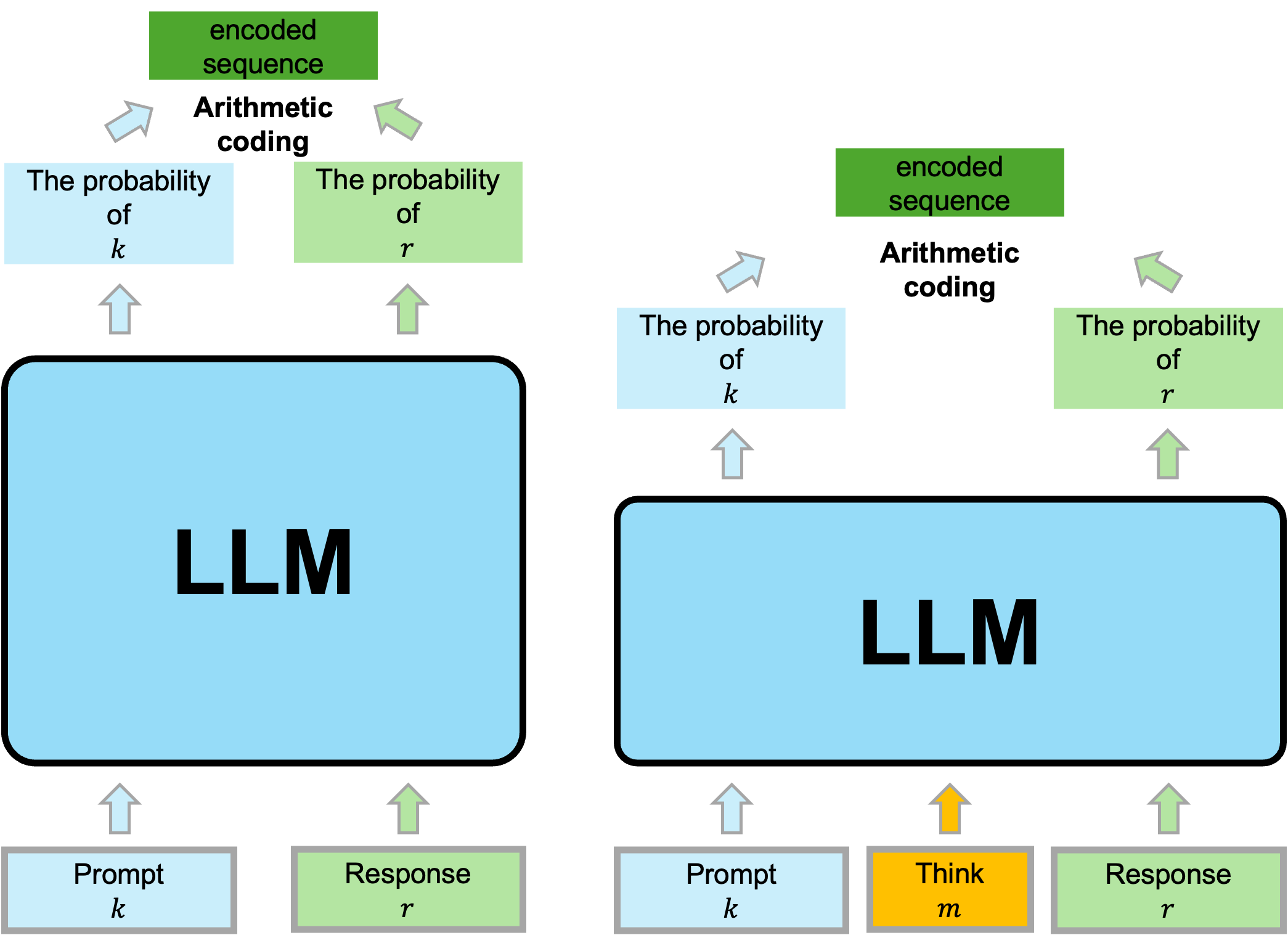}
    \caption{The left figure achieves more efficient compression of the string $(k, r)$ by increasing the model parameters, while the right figure enhances the compression efficiency of the string $(k, r)$ by introducing more intermediate tokens.}
    \label{fig:picture1}
\end{figure}
\noindent\textbf{This Work and Contribution}:
\begin{enumerate}[topsep=2pt, itemsep=0pt, leftmargin=12pt]
    \item We provide a detailed explanation of the relationship between lossless 
    compression and LLMs training, reviewing how to train on the stream of data that 
    is to be compressed.
    \item We model the compression process using conditional Kolmogorov complexity, 
    demonstrating that training LLMs approximates the upper bound of joint Kolmogorov complexity. 
    Optimizing joint Kolmogorov complexity inherently provides regularization.
    \item Although Kolmogorov complexity is not computable, we prove that theoretically 
    there exist decoder-only transformer such that $\displaystyle \lim_{t \to \infty} M(t, x ,y) = C(x\mid y)$.
    \item Through conditional Kolmogorov complexity, we analyze the theoretical limits 
    of LLMs in the inference process, showing that there exist infinite problems that 
    cannot be "solved" by models.
    \item We unify two types of Scaling Laws. Both types of Scaling Laws improve 
    approximation of conditional Kolmogorov complexity by increasing execution 
    steps of Turing machine.
\end{enumerate}

\section{Related Work}
\label{sec:related_work}
\noindent\textbf{Lossless Compression.}
Fabrice Bellard open-sourced NNCP \cite{5} in 2019, a tool for lossless data compression using neural networks, 
and upgraded it in 2021 to use autoregressive models \cite{6} for lossless compression. 
NNCP trains models on the data stream to be compressed. Its idea can be traced back to \cite{7}. 
They explained how to achieve lossless compression using machine learning model predictions and arithmetic coding. 
In fact, lossless compression is equivalent to prediction. 
\citet{4} further advocates viewing LLMs as powerful universal lossless compressors, analyzing the first type of Scaling Laws and the role of tokenization from a lossless compression perspective.

\noindent\textbf{Kolmogorov Complexity.}
Kolmogorov complexity is closely related to many fundamental principles in machine learning. 
For example, Minimum Description Length (MDL) proposed by \cite{8} can be viewed as a computable approximation of Kolmogorov complexity, while the Maximum Likelihood Principle \cite{9} and Maximum Entropy Principle \cite{10} can be seen as special cases of the MDL principle. 
Additionally, \citet{11} further explains the relationships between learning theories, such as PAC learning theory and Kolmogorov complexity. 
\citet{12} studied the connection between the Grokking phenomenon in neural networks and MDL through Kolmogorov complexity.

\section{Background}
\label{sec:background}

\subsection{Turing Machines, Neural Networks \& LLMs}

The Turing machine \citet{28}, proposed by British mathematician Alan Turing, 
is an abstract computational model designed to explore the computability of problems. 
As a cornerstone of computer science theory, the Turing machine has the capability 
to simulate the execution of any algorithm, providing an important framework for 
understanding the nature of computation.

From a theoretical perspective, Turing machines are enumerable. 
Among all Turing machines, there exists a special class called universal Turing machines, 
denoted as $U$, which can simulate the behavior of any other Turing machine.

Neural networks operating on modern electronic computers with finite precision 
are equivalent to a Turing machine that halts in finite steps, i.e., 
a total recursive function. The process of training neural networks 
through training data is essentially a process of searching for a specific Turing machine.

Next, let's briefly review LLMs. LLMs are based on neural network architectures and are 
trained through "next token prediction" tasks. Given an input sequence (training data) 
$x_{0:t} = [s_0,x_1,x_2,\cdots,x_t]$, the model's training objective 
is to predict the next token $x_{t+1}$ in the sequence \cite{13}. Here, $s_0$ represents 
the start token, which is a special token used to indicate the beginning of the sequence.
This process can be formalized as the following optimization problem:
\begin{equation} \label{eq:1}
\min_{\theta} -\frac{1}{n}\sum_{t=1}^{n} \log P(x_{t}\mid x_{0:t-1}; \theta)
\end{equation}
where $P(x_{t}\mid x_{0:t-1}; \theta)$ represents the conditional 
probability of predicting $x_{t}$ given the context sequence $x_{0:t-1}$ 
under the neural network model with parameters $\theta$, and $n$ represents the sequence length.

From a computational theory perspective, LLMs can also be viewed as a special type of Turing machines that minimize \Cref{eq:1}. The core function of this Turing machine is to calculate the conditional probability distribution of the next token in a predefined vocabulary ${\mathcal{S}}$ based on the given context sequence through complex probability calculations.

\subsection{Dynamic Arithmetic Coding}

Dynamic arithmetic coding \cite{14,15} is an adaptive data compression algorithm 
that achieves efficient coding by updating symbol probability distributions 
in real-time.

Let the symbol set be $\mathcal{S} = \{s_1,s_2,\cdots,s_m\}$, 
where each symbol $s_i$ is a sample of a discrete random variable $S$. 
At time $t$, the probability of symbol $s_i$ is denoted by $P_t(s_i),t=1,2,\cdots,n$, 
and the probabilities satisfy $\displaystyle \sum_{i=1}^m P_t(s_i)=1$.

\noindent\textit{\textbf{Arithmetic Encoding Process:}}
\begin{enumerate}[topsep=1pt, partopsep=0pt,itemsep=0pt]
    \item Initialize: interval $[l_0,h_0) = [0,1)$
    \item Update interval: For input symbol $s_{i_t}$ at time $t$, update interval to:
    \begin{align*}
        l_t = l_{t-1}+\sum_{j=1}^{i_t-1}P_t(s_j)\cdot (h_{t-1}-l_{t-1})\\
        h_t = l_{t-1}+\sum_{j=1}^{i_t}P_t(s_j)\cdot (h_{t-1}-l_{t-1})
    \end{align*}
    \item Probability update: Update $P_{t+1}(s_i),i=1,2,\cdots,m$
    \item Repeat: Continue processing next symbol in the sequence until all symbols are processed
    \item Output: Any number in the final interval $[l_n, h_n]$ as the encoding result. In binary, to achieve shortest encoding, select the decimal with shortest binary representation in the interval.
\end{enumerate}

\noindent\textit{\textbf{Arithmetic Decoding Process:}}

The decoder reconstructs the original symbol sequence through reverse operations, 
using the same probability distributions step by step. 
Note that, during the decoding process, apart from the 
arithmetic encoding information, we also need to know the number of decoding iterations.
You can find an example in \Cref{sec:example_of_arithmetic}.

It's important to emphasize that predicted probability 
distributions $\hat{P}_t(s_i)$ can be used for arithmetic coding of symbol sequence. 
The closeness between $\hat{P}_t$ and the true probability distribution $P_t$ directly 
affects coding efficiency - the closer they are, the shorter the average coding length. 

\subsection{LLMs \& Lossless Compression}\label{sec:llmsandlossless}

In this section, we'll explore how to achieve lossless data compression using LLMs.

Suppose we have an original message denoted as $x$, where $x=[s_0,x_1,x_2,x_3,\cdots,x_n]$, with $s_0$ being a special character indicating the start of the original message. Each $x_i$ represents a token in the original message, coming from a finite symbol set $\mathcal{S}$.

Given a LLMs $f$, let $\theta_t$ represent the model parameters at time $t$. Based on this, the model's probability distribution prediction for the $(t+1)$-th token can be formally expressed as:
\begin{equation}
\hat{p}_{t+1} = f(x_{t+1}\vert x_{0:t} ;\theta_t) 
\end{equation}
where $\hat{p}_{t+1}$ represents the model's probability distribution prediction for the $(t+1)$-th token, which is a discrete distribution.

The LLMs $f$'s code (such as the Pytorch training code) and various random seeds (such as parameter initializations, etc.) are packaged into a single piece of information. We denote this information as $F$. Now we begin lossless compression of the data $x$. Assume there is an information receiver and an information sender.

The information sender initializes neural network parameters $\theta_0$ according to $F$.

\begin{enumerate}[topsep=1pt, partopsep=0pt,itemsep=0pt]
    \item At time 0: 
    \begin{enumerate}
        \item Execute $f(x_1 \mid s_0 ;\theta_0)$. Output the prediction probability $\hat{p}_1$ for the first token.
        \item Use arithmetic coding on character $x_1$ with $\hat{p}_1$. Note that while $x_1$'s true probability distribution $p_1$ is unknown, this does not affect our ability to perform arithmetic coding on $x_1$ using the predicted probability distribution $\hat{p}_1$. Note that at this point, arithmetic coding has just begun its first interval division, then selected an interval based on $x_1$. We denote this interval as $[l_1,r_1]$.
        \item Perform backpropagation based on the true $x_1$ and predicted $\hat{p}_1$ to update $f$'s parameters, obtaining $\theta_1$.
    \end{enumerate}    
    \item At time 1:  
    \begin{enumerate}
        \item Execute $f(x_2 \mid s_0,x_1;\theta_1)$. Output the prediction probability $\hat{p}_2$ for the second token.
        \item Use arithmetic coding on character $x_2$ with $\hat{p}_2$. Note that at this point, arithmetic coding performs a second interval division based on $[l_1,r_1]$, then selects an interval based on $x_2$. We denote this interval as $[l_2,r_2]$. It's easy to see that as the arithmetic coding processes, intervals are continuously subdivided, and the probability $\hat{p}_t$ used for each division changes.
        \item Perform backpropagation based on the true $x_2$ and predicted $\hat{p}_2$ to update $f$'s parameters, obtaining $\theta_2$.
    \end{enumerate}
\end{enumerate}

\noindent We continuously repeat steps 2.a through 2.c until $n-1$. Finally, we will obtain an interval $[l_{n},r_{n}]$ and parameters $\theta_{n}$. We select the shortest binary decimal $z_n$ within $[l_{n},r_{n}]$ as the arithmetic coding for the entire information $x$.

This completes the overall lossless compression process. The compressed information consists of three parts:

\begin{itemize}[topsep=2pt, itemsep=0pt, leftmargin=12pt]
    \item Arithmetic coding $z_n$.
    \item Program information $F$.
    \item Required number of decoding iterations $d$.
\end{itemize}

Through the above process, we'll discover that the compressed information does not include the model parameters. In fact, we don't need to transmit the model parameters to perform lossless decompression.

The information sender transmits the three compressed components to the information receiver via bitstream. The information receiver now begins decoding.

The information receiver executes the code according to $F$ and initializes neural network parameters $\theta_0$. Note that since $F$ contains all random seed information, this $\theta_0$ is identical to the $\theta_0$ from the earlier steps.

\begin{enumerate}[topsep=1pt, partopsep=0pt,itemsep=0pt]
    \item At time 0:
    \begin{enumerate}
        \item Execute $f(x_1 \mid s_0 ;\theta_0)$. Output the prediction probability $\hat{p}_1$ for the first token.
        \item Perform arithmetic decoding based on $\hat{p}_1$ and the received $z_n$. The decoding process makes the first interval division based on $\hat{p}_1$. The information receiver will find that $z_n$ lies within the interval $[l_1,r_1]$. Thus, the first token $x_1$ is decoded.
        \item Perform backpropagation based on the decoded $x_1$ and predicted $\hat{p}_1$ to update $f$'s parameters, obtaining $\theta_1$. Note that since all random seeds are identical between the information sender and receiver, this $\theta_1$ is identical to the previous $\theta_1$.
    \end{enumerate}
    \item At time 1:
    \begin{enumerate}
        \item Execute $f(x_2 \vert s_0,x_1 ;\theta_1)$. Output the prediction probability $\hat{p}_2$ for the second token.
        \item Perform arithmetic decoding based on $\hat{p}_2$ and the received $z_n$, decoding the second token $x_2$. The decoding process makes a second interval division of $[l_1,r_1]$ based on $\hat{p}_2$. The information receiver will find that $z_n$ lies within the interval $[l_2,r_2]$. Thus, the second token $x_2$ is decoded.
        \item Perform backpropagation based on the decoded $x_2$ and predicted $\hat{p}_2$ to update $f$'s parameters, obtaining $\theta_2$.
    \end{enumerate}
\end{enumerate}

\noindent We continuously repeat steps 2.a through 2.c until $n-1$ (where $n-1$ is obtained from reading information $d$). Finally, we will have losslessly decoded $z_n$ back into the original message $x$. In the entire decoding process, the transmission of the model parameters $\theta_n$ is not required.

If we use probability distribution $\hat{p}$ to perform arithmetic coding 
on $S$, we can calculate that its average coding length satisfies the following relationship:
\begin{equation}\label{eq:3}
    \begin{aligned}
        \bar{L} &\leq -\sum_{s\in {\mathcal{S}}}p(s)\log_2 \hat{p}(s) +2 \\
        &:= H(p,\hat{p})+2
    \end{aligned}
\end{equation}
where $\bar{L}$ denotes the average coding length, $p$ is the true distribution of $S$, 
and $H(p,\hat{p})$ is the entropy between distributions $p$ and $\hat{p}$. 
We will use \Cref{eq:3} to calculate the lossless compression efficiency of LLMs.

The challenge of calculating compression rates lies in estimating the coding length $\vert z_n \vert$, where $\vert \cdot \vert$  denotes the length.

For dataset $x$, let $p_t$ represent its true distribution in autoregressive modeling:
\begin{equation*}
    p_{t} = p(x_t \mid x_{0:t-1})
\end{equation*}
The average coding length of arithmetic coding $z_n$ satisfies the following relationship:
\begin{equation}
    \begin{aligned}  
        \bar{L}_{z_n} &\leq \frac{1}{n}(2n+\sum_{t=1}^nH(p_t,\hat{p}_t))\\ &= 2+ \frac{1}{n}\sum_{t=1}^nH(p_t,\hat{p}_t) \\ &= 2-\frac{1}{n}\sum_{t=1}^n\log P(x_t \mid x_{0:t-1};\theta_{t-1})
    \end{aligned} 
\end{equation}
The $\vert F\vert$ and $\vert d \vert$ are basically fixed and negligible compared to $\vert z_n \vert$. To maximize compression, we need to minimize the average coding length of $z_n$. Since directly reducing $z_n$ is difficult, and given that we have an upper bound for $z_n$, we can instead try to reduce $z_n$'s upper bound, namely:
\begin{equation} \label{eq:5}
    \min -\frac{1}{n}\sum_{t=1}^n\log P(x_t \mid x_{0:t-1};\theta_{t-1})
\end{equation}
Obviously, the optimization objective described in \Cref{eq:5} exactly matches the training objective of LLMs. This indicates that the training process of LLMs can essentially be understood as a continuous compression process of the training data.

\subsection{Kolmogorov complexity}

Kolmogorov complexity \cite{11} is a concept for measuring the information content of an object (such as strings, numbers, etc.). 
Specifically, it represents the length of the shortest program needed to generate that object on a given universal Turing machine $U$.

Given a universal Turing machine $U$, the Kolmogorov complexity $C_U(x)$ of string $x$ is defined as:
\begin{equation}
    C_U(x) = \min \{l(p): U(p)=x\}
\end{equation}
where:
\begin{itemize}[topsep=2pt, itemsep=0pt, leftmargin=12pt]
    \item $p$ is a program, and $l(p)$ is the length of the program.
    \item $U(p)=x$ means program $p$ outputs $x$ when run on universal Turing machine $U$.
\end{itemize}

Kolmogorov complexity measures the length of the shortest description needed to generate $x$. If $x$ has regularity, it can be described by a relatively short program, resulting in low Kolmogorov complexity; if $x$ is random, it requires a longer program to describe, resulting in high Kolmogorov complexity.

We emphasize two important properties here for Kolmogorov complexity:
\begin{enumerate}[topsep=0pt, partopsep=0pt,itemsep=0pt]
    \item \textbf{Invariance Theorem:} While Kolmogorov complexity depends on the choice of universal Turing machine $U$, according to the invariance theorem, the Kolmogorov complexity under different universal Turing machines only differs by a constant. Therefore, we often omit the subscript $U$ in the definition of Kolmogorov complexity.
    \item \textbf{Uncomputability:} Kolmogorov complexity is uncomputable, meaning there is no algorithm that can precisely calculate the Kolmogorov complexity of any arbitrary string.
\end{enumerate}

Conditional Kolmogorov complexity is an extension of Kolmogorov complexity that measures the shortest description length needed to generate an object given additional information (conditions). Specifically, it represents the length of the shortest program needed to generate object $x$ on universal Turing machine $U$, where the program can utilize additional information $y$.

Given a universal Turing machine $U$, the conditional Kolmogorov complexity $C_U(x \mid y)$ of string $x$ given condition $y$ is defined as:
\begin{equation}
    C_U(x \mid y) = \min \{l(p): U(p,y)=x\} 
\end{equation}
where:
\begin{itemize}[topsep=2pt, itemsep=0pt, leftmargin=12pt]
    \item $p$ is a program, and $l(p)$ is the length of the program.
    \item $U(p,y)=x$ means program $p$ runs on universal Turing machine $U$ with input $y$ and it outputs $x$.
\end{itemize}

Conditional Kolmogorov complexity measures the shortest description length needed to generate $x$ given knowledge of $y$. If $y$ provides useful information about $x$, then the program to generate $x$ might be shorter, thus reducing the conditional Kolmogorov complexity.

Joint Kolmogorov complexity is another extension of Kolmogorov complexity, used to measure the shortest length needed to describe two objects $x$ and $y$ together. It represents the length of the shortest program needed to generate both $x$ and $y$ on universal Turing machine $U$.

Given a universal Turing machine $U$, the joint Kolmogorov complexity $C_U(x,y)$ of strings $x$ and $y$ is defined as:
\begin{equation}
    C_U(x,y) = min \{l(p): U(p) = \langle x,y\rangle\}
\end{equation}
where:
\begin{itemize}[topsep=2pt, itemsep=0pt, leftmargin=12pt]
    \item $p$ is a program, and $l(p)$ is the length of the program.
    \item $U(p) = \langle x,y\rangle$ means program $p$ outputs some encoding of $x$ and $y$ when run on universal Turing machine $U$ (e.g concatenating $x$ and $y$ into a single string).
\end{itemize}

Joint Kolmogorov complexity measures the shortest description length needed to generate both $x$ and $y$ simultaneously. If there exists some correlation or regularity between $x$ and $y$, then generating their joint description might be shorter than generating their descriptions separately.

\subsection{Two Types of Scaling Laws}

The Scaling Laws in LLMs can primarily be categorized into two types:

\begin{itemize}[topsep=2pt, itemsep=0pt, leftmargin=12pt]
    \item Pre-training Scaling Laws
    \item Inference Scaling Laws
\end{itemize}

\citet{1} systematically studied the impact of model parameter scale on language model performance and proposed corresponding Scaling Laws. It focuses on resource optimization during the pre-training process, in other words, to improve model performance through increasing key resources such as data volume, model parameter count, and computational power. The proposal of such Scaling Laws also laid an important foundation for training and optimization of LLMs.

\citet{16} conducted an in-depth study on the feasibility of improving LLMs performance by increasing computational resources during the inference process. This research direction was empirically supported by OpenAI's o1 model released in September 2024. Research shows that significantly increasing computational resources and time investment during the inference process can effectively improve model performance in complex tasks such as mathematics, programming, and logical reasoning. This finding not only validates the importance of computational scaling during inference but also provides a new technical pathway for optimizing LLMs performance. The outstanding performance of OpenAI's o1 model \cite{17} further confirms that there is a significant positive correlation between inference process resource investment and model performance, particularly demonstrating important practical value when handling high-complexity tasks.
\section{Analysis of LLMs Pre-training and Inference from the Perspective of Conditional Kolmogorov Complexity}

In this section, we will analyze the pre-training and inference processes of LLMs using conditional Kolmogorov complexity, and reach the following conclusions:

\begin{enumerate}[topsep=2pt, itemsep=0pt, leftmargin=12pt]
    \item The pre-training process of LLMs is equivalent to approximating joint Kolmogorov complexity $C(x,y)$.
    \item Although Kolmogorov complexity is uncomputable, we prove that theoretically there exist total recursive functions or decoder-only transformer such that $\displaystyle \lim_{t \to \infty} M(t,x,y) = C(x \mid y)$.
    \item LLMs are constrained by the number of Turing machine execution steps, which prevents them from solving certain problems.
    \item Theoretically, we have unified the two types of Scaling Laws. Both types of Scaling Laws work by increasing the number of Turing machine execution steps to better approximate conditional Kolmogorov complexity.
\end{enumerate}

\subsection{The Relationship Between LLMs Pre-training and Conditional Kolmogorov Complexity}

Let us denote training data as $x$ and the model as $y$. Using data $x$ to train a model $y$ with good "generalization" ability can be viewed as searching for a $y$ that minimizes their joint Kolmogorov complexity $C(x, y)$.
\begin{equation}\label{eq:9}
    \min_y C(x,y)
\end{equation}
Next, we'll demonstrate the reasonableness of this formula. According to \citeposs{18} work, $C(x, y)$ can be further decomposed into the following form:
\begin{align} \label{eq:10}
    C(x,y) = & \ C(y) \nonumber + C(x \mid y) \nonumber \\
            & + O(\log(C(x,y)))
\end{align}
$C(x \mid y)$ represents the shortest description length of $x$ given model $y$. $C(y)$ represents the shortest description length of model $y$. In other words, finding a $y$ that minimizes $C(x, y)$ is equivalent to finding the simplest possible model $y$ that minimizes $C(x \mid y)$.

In fact, the right side of \Cref{eq:10} can be viewed as the MDL in machine learning. Here, $C(y)$ represents model complexity, while $C(x \mid y)$ represents the encoding length of data under that model. $C(y)$ can be viewed as model regularization, while $C(x \mid y)$ reflects how well the model fits the data.

It's important to note that we measure model complexity using Kolmogorov complexity rather than simply counting the number of model parameters. This approach is quite reasonable. For example, for a neural network with 10,000 parameters, when all parameters are 1, the network's complexity is very low; when the parameters are completely random, the network's complexity increases significantly.

If we solely optimize $y$ to minimize $C(x \mid y)$, it may result in overfitting. For instance, if we directly set $y = x$, then model $y$ becomes exactly equivalent to data $x$, which is essentially memorization. In this case, $C(x \mid y)$ would be $O(1)$, while $C(y)$ would be very large. A neural network with good generalization capabilities should start from completely random parameter initialization and gradually learn, during which process both $C(y)$ and $C(x \mid y)$ should decrease gradually.

Below, we will demonstrate through rigorous mathematical derivation that LLMs pre-training is equivalent to directly approximating joint Kolmogorov complexity $C(x, y)$ upper bound, and naturally considers model complexity during the training process.

\begin{theorem}
    Given a universal Turing machine $U$, the joint Kolmogorov complexity $C(x, y)$ of strings $x$ and $y$ satisfies the following inequality:
    \begin{align}\label{eq:11} 
    C(x,y) \leq & C(y) + C(x \mid y)\nonumber \\ 
          & +2l(C(y)) +O(1)
    \end{align} 
    where $l$ represents string length, and $O(1)$ is a constant related to Turing machine $U$.
\end{theorem}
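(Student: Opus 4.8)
The plan is to exhibit a single program that first reconstructs $y$ from its shortest description, then reconstructs $x$ using $y$ as side information, and finally prints the pair $\langle x,y\rangle$; bounding the length of this program and invoking the invariance theorem gives \Cref{eq:11}.

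Concretely, let $p^\ast$ be a shortest program for $y$ on $U$, so that $U(p^\ast)=y$ and $l(p^\ast)=C(y)$, and let $q^\ast$ be a shortest program for $x$ given $y$, so that $U(q^\ast,y)=x$ and $l(q^\ast)=C(x\mid y)$. Since plain Kolmogorov complexity uses programs that are not self-delimiting, we cannot simply concatenate $p^\ast$ and $q^\ast$: a decoder would not know where the first program ends. To fix this, prepend a self-delimiting encoding $\overline{m}$ of the integer $m=l(p^\ast)=C(y)$ — for instance, write the binary representation of $m$ with every bit doubled and terminate with the pattern $01$. This encoding has length $2l(C(y))+O(1)$ and can be parsed unambiguously from left to right. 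Define the candidate program $r=\overline{m}\,p^\ast\,q^\ast$, whose total length is
\begin{equation*}
l(r)=2l(C(y))+C(y)+C(x\mid y)+O(1).
\end{equation*}

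Next, describe a Turing machine $V$ that realizes the intended decoding: on input $r$, it scans the doubled-bit prefix to recover $m$ together with the position where the prefix ends; it reads the following $m$ bits as a string $p$ and simulates $U(p)$ to obtain $y$; it treats the remaining suffix as a string $q$ and simulates $U(q,y)$ to obtain $x$; finally it outputs the agreed encoding $\langle x,y\rangle$. By construction $V(r)=\langle x,y\rangle$, hence $C_V(x,y)\le l(r)$. By the invariance theorem applied to the universal machine $U$ versus $V$, there is a constant $c_{U,V}=O(1)$ with $C(x,y)=C_U(x,y)\le C_V(x,y)+c_{U,V}$, and combining this with the length bound for $l(r)$ yields exactly the claimed inequality.

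The only real subtlety — and hence the part to get right — is the delimiting of the two concatenated programs: because we work with plain (not prefix) complexity, the overhead of a self-delimiting length field is unavoidable, and it is precisely this overhead that produces the $2l(C(y))$ term; a naive length prefix of $l(C(y))$ bits would be ambiguous, whereas a fully self-delimiting encoding of $C(y)$ costs $2l(C(y))+O(1)$. Everything else is bookkeeping: the simulations of $U$ inside $V$, the fixed code to parse the prefix and to form $\langle x,y\rangle$, and the constant from the invariance theorem are all absorbed into the single $O(1)$.
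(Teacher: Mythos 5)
Your proof is correct and takes essentially the same approach as the paper: both exhibit the concatenated program $\overline{l(p)}\,p\,q$ with a self-delimiting length prefix and invoke the invariance theorem. The only cosmetic difference is the choice of self-delimiting encoding (your bit-doubling scheme versus the paper's unary-length prefix $\overline{x}=1^{l(x)}0x$), both of which give the $2l(C(y))+O(1)$ overhead.
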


The proof is in \Cref{subsec:proof_of_theorem_41}.

\begin{corollary}
    The pre-training process of LLMs is actually a computable (total recursive) approximation of the right side of \Cref{eq:11}.
\end{corollary}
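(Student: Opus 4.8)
The plan is to read the corollary off the lossless-compression construction of \Cref{sec:llmsandlossless} together with the inequality \Cref{eq:11} just proved; the only real work is to exhibit an explicit total recursive function whose output is a computable upper bound on $C(x,y)$, and then to match its three summands with the three summands on the right of \Cref{eq:11}. Throughout I take $x$ to be the training stream and $y$ to be the trained model, that is, the code/architecture together with the final parameters $\theta_n$ (equivalently, $F$ together with $\theta_n$).

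First I would argue that the whole sender/receiver pipeline of \Cref{sec:llmsandlossless} is total recursive: on a finite-precision machine the forward pass, the backpropagation update, and the arithmetic-coding interval split are each primitive recursive, and the loop runs for exactly $n$ steps, so the sender computes a total function $(F,x)\mapsto(z_n,d)$ and the receiver a total function $\langle F,z_n,d\rangle\mapsto\langle x,\theta_n\rangle=\langle x,y\rangle$. Hence $\langle F,z_n,d\rangle$ is a legal program for $\langle x,y\rangle$ on $U$, so $C(x,y)\le |F|+|z_n|+|d|+O(1)$, and this is the point of the word ``computable'' in the corollary: the right side here is obtained by actually running the training code, so it is a total recursive over-approximation of $C(x,y)$, whereas the right side of \Cref{eq:11} is not, since it involves the uncomputable quantities $C(y)$ and $C(x\mid y)$.

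Second I would bound $|z_n|$ by the arithmetic-coding estimate already derived in \Cref{sec:llmsandlossless}, namely $|z_n|\le 2n+\sum_{t=1}^{n}H(p_t,\hat p_t)=2n-\sum_{t=1}^{n}\log P(x_t\mid x_{0:t-1};\theta_{t-1})$, so that $|z_n|$ is exactly the pre-training objective of \Cref{eq:5} scaled by $n$ and shifted by $2n$; minimizing the LLM pre-training loss is therefore minimizing the adjustable part of the computable bound on $C(x,y)$ above. Then I would line the pieces up with \Cref{eq:11}: $|F|$, the fixed program that regenerates the model, is the computable stand-in for the model-complexity term $C(y)$ (it appears only because $\theta_n$ is never transmitted, which is the regularization reading; and the fact that the online code $z_n$ carries, on top of a best-in-hindsight code length of size roughly $C(x\mid y)$, a regret of the order of the model's own algorithmic complexity is what makes this identification sharp rather than merely formal); $|z_n|$, the training loss, is the computable stand-in for $C(x\mid y)$; and $|d|$, a self-delimiting encoding of the step count $n$ with $|d|\le 2l(n)+O(1)$, plays the bookkeeping role of the $2l(C(y))$ prefix term. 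Assembling these yields a total recursive quantity that both upper-bounds $C(x,y)$ and has, summand for summand, the shape of the right side of \Cref{eq:11} with each uncomputable term replaced by its computable surrogate, which is exactly the corollary.

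The step I expect to be the main obstacle is making the word ``approximation'' precise in the correct direction. Kolmogorov complexity is only upper semicomputable, so the strongest honest claim is that pre-training produces a sequence of computable upper bounds on $C(x,y)$ that is monotone in the training budget and descends toward the right side of \Cref{eq:11}; one can never certify that the infimum is attained, and in general $|F|\ne C(y)$ and $|z_n|$ does not reach $C(x\mid y)$ in finitely many steps. The corollary should therefore be stated as ``a computable upper bound, refinable with more computation'' rather than as an equality, and the term-by-term correspondence with \Cref{eq:11} must be presented as an identification of roles --- with the split $|z_n|\leftrightarrow C(x\mid y)+C(y)$ justified only asymptotically, via a prequential-coding estimate of the regret --- rather than of exact values. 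This is also precisely the framing needed for the later scaling-laws discussion, since ``more parameters / more inference tokens'' is exactly the knob that drives this computable upper bound downward.
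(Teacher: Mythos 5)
Your proposal follows the same route as the paper: both invoke the lossless-compression pipeline of \Cref{sec:llmsandlossless}, observe that the triple $(z_n, F, d)$ is a total-recursive description from which $\langle x, y\rangle$ can be reconstructed, and then use the arithmetic-coding bound to identify minimizing $|z_n|$ with the LLM training objective of \Cref{eq:5}. The one place you diverge in mechanics is the handling of the $2l(C(y))$ prefix term: the paper does not attempt a term-by-term identification but instead writes the collective bounds $C(y)+C(x\mid y)+O(1)\le |z_n|+|F|+|d|$ and $2l(C(y))+O(1)\le 2l(|z_n|+|F|+|d|)$, so the whole right side of \Cref{eq:11} is bounded by a single computable expression; you instead pair $|d|\leftrightarrow 2l(C(y))$, which is a tidy heuristic but, as you yourself flag, not literally justified ($|d|$ encodes the iteration count $n$, not a length of a description of $y$). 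Your remark that the prequential code $z_n$ itself absorbs a regret of order roughly the model complexity --- so that the clean split $|F|\leftrightarrow C(y)$, $|z_n|\leftrightarrow C(x\mid y)$ risks double-counting and should be read only asymptotically --- is a genuine refinement that the paper does not make, and your insistence that the word ``approximation'' means a monotone, one-sided (upper-semicomputable) bound rather than convergence to the value is a more honest statement of what the corollary can actually claim. Both arguments are sound; yours is sharper on the epistemics of ``approximation,'' the paper's is tighter on where the $2l(\cdot)$ term lands.
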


We use the content from \Cref{sec:llmsandlossless} to perform lossless compression of $x$. Note that during the lossless compression process, we not only obtain the encoding of $x$ based on $y$ but also obtain the parameters of model $y$.

The compressed information consists of three parts:

\begin{itemize}[topsep=2pt, itemsep=0pt, leftmargin=12pt]
    \item Arithmetic coding $z_n$.
    \item Program information $F$.
    \item Required number of decoding iterations $d$.
\end{itemize}

Obviously, $C(y) + C(x \mid y) + O(1) \leq \vert z_n \vert + \vert F \vert+\vert d \vert$ and $2l(C(y)) + O(1) \leq 2l(\vert z_n \vert + \vert F \vert+\vert d \vert)$. The smaller the encoding length of $z_n$, the closer it gets to the upper bound of $C(x, y)$. In other words, the pre-training process of LLMs is essentially searching for a model $y$ of moderate complexity such that, given $y$, the encoding length of input data $x$ is minimized. This objective reflects the core idea of the MDL principle, which seeks optimal balance between model complexity and data fitting.

\subsection{The Relationship Between LLMs Inference and Conditional Kolmogorov Complexity}

We rewrote \Cref{eq:9} and applied it to infinite $x$, obtaining the following expression:
\begin{equation}\label{eq:12}
    y^* = \arg \min_y \sum_{x} C(x,y)
\end{equation}
In the LLMs, $x$ here represents a string concatenated from $(k, r)$, where $k$ is our input (prompt) and $r$ is our expected output (response). We \emph{conjecture} that $C(x \mid y^*)$ represents the ideal state of artificial general intelligence (AGI), as it can perform lossless compression on any $x$ we care about, indicating it has predictive capability for any $x$. Note that if $x$ cannot be losslessly compressed, it means $x$ is "random" and lacks regularity, therefore such $x$ are not our targets of interest.

\emph{Assuming} $y^*$ in \Cref{eq:12} exists and is known, for simplicity of expression, we will denote $y^*$ as $y$ in the following sections.

Next, we focus on whether we can represent $C(x \mid y)$ through neural networks. Obviously, the answer is negative because $C(x \mid y)$ is uncomputable. However, fortunately, we can approximate $C(x \mid y)$ through total recursive functions.

\begin{theorem}\label{the:1}
    For each $x$, there exists a total recursive function $\phi(t,x,y)$ that is monotonically decreasing with $t$, such that $\displaystyle \lim_{t \to \infty} \phi(t,x,y)= C(x\mid y)$.
\end{theorem}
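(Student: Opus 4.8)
The plan is to construct $\phi(t,x,y)$ as a "dovetailing" search over all programs, bounded by $t$ steps of computation, and then show monotonicity and convergence. First I would fix the universal Turing machine $U$ from the definition of conditional Kolmogorov complexity, and enumerate all programs $p_1, p_2, p_3, \ldots$ (say, in order of increasing length, breaking ties lexicographically). For a given $t$, I would define $\phi(t,x,y)$ by running $U(p_i, y)$ for $i = 1, \ldots, t$, each for at most $t$ steps, and setting
\begin{equation*}
    \phi(t,x,y) = \min\bigl(\{\, l(p_i) : 1 \le i \le t,\ U(p_i,y) \text{ halts within } t \text{ steps with output } x \,\} \cup \{\, t + l(x) + c \,\}\bigr),
\end{equation*}
where $c$ is a fixed constant ensuring the "fallback" value $t + l(x) + c$ is a genuine upper bound on $C(x\mid y)$ for all $t$ (e.g., coming from a trivial program that ignores $y$ and prints $x$ verbatim, whose length is $l(x) + O(1)$; one can take the fallback to be $\min(t+l(x)+c,\ l(x)+c')$ if one wants it bounded, but keeping $t$ in it does no harm since it only matters for small $t$). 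This $\phi$ is total recursive because each simulation is explicitly time-bounded, so the computation always halts.

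Next I would verify the two required properties. For \emph{monotonic non-increasing in $t$}: increasing $t$ to $t+1$ enlarges both the set of programs considered ($t$ vs. $t+1$ programs) and the per-program step budget, so any program that witnessed a value at stage $t$ still witnesses it at stage $t+1$; hence the minimum can only stay the same or decrease. (If the statement insists on \emph{strictly} decreasing, one replaces $\phi$ by $\tilde\phi(t,x,y) = \phi(t,x,y) - 2^{-t}$ or, more cleanly, interleaves the bounded search with a term like $1/(t+1)$ that is subtracted; I would remark on this but keep the clean non-strict version as the mathematical content, since "monotonically decreasing" in this literature standardly means non-increasing.) For \emph{convergence to $C(x\mid y)$}: let $p^*$ be a shortest program with $U(p^*, y) = x$, so $l(p^*) = C(x\mid y)$, and let $i^*$ be its index in the enumeration and $T^*$ the number of steps $U(p^*,y)$ takes to halt. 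Then for all $t \ge \max(i^*, T^*)$, the program $p^*$ is among the first $t$ programs and halts within the step budget, so $\phi(t,x,y) \le l(p^*) = C(x\mid y)$. Conversely, $\phi(t,x,y) \ge C(x\mid y)$ always, because any $p_i$ contributing to the minimum genuinely satisfies $U(p_i,y)=x$, hence $l(p_i) \ge C(x\mid y)$, and the fallback term is $\ge C(x\mid y)$ by choice of $c$. Therefore $\phi(t,x,y) = C(x\mid y)$ for all sufficiently large $t$, which in particular gives $\lim_{t\to\infty}\phi(t,x,y) = C(x\mid y)$.

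The main obstacle — really the only subtle point — is ensuring $\phi$ is \emph{total} recursive (defined and halting on every input) rather than merely partial recursive, and this is exactly what the explicit $t$-step truncation buys us: we never run any simulation unboundedly. A secondary point to handle carefully is the lower bound $\phi(t,x,y) \ge C(x\mid y)$, which requires that the fallback value never dips below $C(x\mid y)$; this is why I include the additive constant $c$ tied to the trivial "print $x$" program, and one should double check that $l(x) + c \ge C(x\mid y)$ holds for the chosen $U$ (immediate from the definition of $C(\cdot\mid\cdot)$, since ignoring the condition $y$ is always an option). Everything else is routine. I would present the construction, then state monotonicity as a one-line observation, then prove convergence via the two inequalities above; the whole proof is short once $\phi$ is written down correctly. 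The same construction, incidentally, underlies the later claim about a decoder-only transformer realizing $\lim_{t\to\infty} M(t,x,y) = C(x\mid y)$, since a transformer with a scratchpad can simulate this bounded dovetailing search.
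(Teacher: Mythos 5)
Your construction is essentially the paper's: a time-bounded search over programs, which is total recursive because every simulation is explicitly truncated at $t$ steps, is non-increasing in $t$, and stabilizes at $C(x\mid y)$ once the shortest witnessing program has been run for enough steps. The one step that would actually fail is your default value $t+l(x)+c$. If at stages $t$ and $t+1$ no candidate program halts within the respective budgets and outputs $x$, then
\begin{equation*}
\phi(t+1,x,y) = (t+1)+l(x)+c > t+l(x)+c = \phi(t,x,y),
\end{equation*}
so $\phi$ increases; your monotonicity argument silently assumes the minimum never comes from the fallback term. Your parenthetical that keeping $t$ in the fallback ``does no harm'' is therefore mistaken: monotonicity must hold for all $t$, not just eventually. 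The fix you already name -- cap the fallback at the constant $l(x)+c'$ coming from the trivial ``print $x$'' program -- is mandatory, and with it your $\phi$ is exactly the paper's (the paper also uses $l(x)+c$ as both the search bound on program length and the default value). The only remaining cosmetic difference is that the paper enumerates all programs of length at most $l(x)+c$ rather than the first $t$ programs in a global ordering; both guarantee the shortest program is eventually considered, so the arguments coincide.
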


The proof is in \Cref{subsec:proof_of_theorem_42}.

\begin{corollary}\label{cor:2}
    Theoretically, there exists a series of decoder-only transformer $M(t,x,y)$, such that $\displaystyle \lim_{t \to \infty} M(t,x,y) = C(x \mid y)$.
\end{corollary}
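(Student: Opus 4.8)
The plan is to reduce the statement to Theorem~\ref{the:1} plus a Turing-completeness property of decoder-only transformers. By Theorem~\ref{the:1}, for each $x$ there is a total recursive function $\phi(t,x,y)$, monotonically decreasing in $t$, with $\lim_{t\to\infty}\phi(t,x,y)=C(x\mid y)$. Being total recursive, $\phi$ is computed by some Turing machine $T_\phi$ that halts on every input $(t,x,y)$ and writes (the binary encoding of) $\phi(t,x,y)$ on its tape. Since, as recalled in \Cref{sec:background}, a neural network run on a finite-precision computer is a total recursive function and training/simulation amounts to picking out a particular Turing machine, it suffices to exhibit a decoder-only transformer that reproduces the input–output behavior of $T_\phi$.

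First I would fix the output representation. Because $C(x\mid y)$ is an unbounded nonnegative integer, $M(t,x,y)$ cannot be a single logit; it must be decoded from a finite token string (e.g.\ the binary expansion of an integer) produced autoregressively. The statement $\lim_{t\to\infty}M(t,x,y)=C(x\mid y)$ is then read as convergence of the decoded integers, and the monotonicity in $t$ is inherited directly from $\phi$. Second, I would invoke the simulation principle that a decoder-only transformer, once allowed to emit and re-read intermediate tokens (a chain-of-thought scratchpad), can simulate any Turing machine: for $T_\phi$ there is a fixed decoder-only architecture whose greedy generation, started from an encoding of the prompt $(t,x,y)$, halts and outputs $\phi(t,x,y)$, using a number of intermediate tokens that grows with the running time of $T_\phi$ on $(t,x,y)$. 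Defining $M(t,x,y)$ to be the integer decoded from that generation gives $M(t,x,y)=\phi(t,x,y)$ for all $t$, hence $\lim_{t\to\infty}M(t,x,y)=C(x\mid y)$, and $M(\cdot,x,y)$ is monotonically decreasing.

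Finally I would address the word ``series.'' It can be taken in either of two equivalent senses, both of which match the paper's thesis: a single architecture run for progressively more autoregressive steps as $t$ grows (more intermediate tokens, i.e.\ the second scaling law), or a family $\{M(t,\cdot,\cdot)\}_t$ of transformers whose size grows with $t$ so that a bounded-depth network can absorb the computation of $T_\phi(t,\cdot,\cdot)$ internally (more parameters, i.e.\ the first scaling law). I would note explicitly that larger $t$ corresponds to more Turing-machine execution steps in both realizations, which is the unification claim the corollary is meant to illustrate.

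The step I expect to be the main obstacle is the faithfulness of the transformer-to-Turing-machine simulation under realistic constraints: a vanilla finite-precision, bounded-context transformer is \emph{not} Turing complete, so the argument must either adopt the idealizations used in the Turing-completeness literature (exact arithmetic, unbounded positional encodings) or route all unbounded memory through the emitted token stream so that only a bounded working state ever lives inside the network. Getting this bookkeeping right — ensuring the scratchpad tokens faithfully encode the tape of $T_\phi$ and that self-attention can locate the current head position and transition correctly at each generated step — is where the real work lies; once the simulation is in hand, the convergence and monotonicity conclusions follow immediately from Theorem~\ref{the:1}.
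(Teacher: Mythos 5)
Your proposal is correct and follows essentially the same route as the paper: reduce to Theorem~\ref{the:1} to get the total recursive approximant $\phi(t,x,y)$, then invoke Turing-completeness of decoder-only transformers to simulate it, noting (as the paper does by citing \citet{19}) that the simulation requires idealizations such as infinite-precision rationals and hardmax attention. The additional discussion of output encoding and scratchpad bookkeeping elaborates the same construction rather than offering a different one.
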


\citet{19} proved that decoder-only transformer are Turing complete under infinite-precision 
rational numbers and hardmax attention mechanisms. Under this assumption, we can construct 
decoder-only transformer $M(t, x,y)$ to simulate $\phi(t,x,y)$. Note that we currently 
cannot prove whether the LLMs and arithmetic coding obtained using the content 
from \Cref{sec:llmsandlossless} can infinitely approximate $C(x \mid y)$.

\begin{corollary}\label{cor:3}
    Given $x$ and $t$, we cannot determine whether there exists $M(t, x, y) = C(x \mid y)$.
\end{corollary}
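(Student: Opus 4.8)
The plan is to argue by a reduction: if we could decide, given $x$ and $t$, whether $M(t,x,y)$ has already reached $C(x\mid y)$, then we could \emph{compute} the map $x\mapsto C(x\mid y)$, which is impossible. First I would record the relevant properties of the approximating family. By \Cref{the:1} and \Cref{cor:2}, $M(t,x,y)$ computes the value of the total recursive function $\phi(t,x,y)$, which is integer-valued (it is always the length of some halting program, or a fixed default), non-increasing in $t$, and satisfies $\lim_{t\to\infty}\phi(t,x,y)=C(x\mid y)$. A non-increasing sequence of nonnegative integers is eventually constant, so for every $x$ there is a threshold $T(x)$ with $\phi(t,x,y)=C(x\mid y)$ exactly for $t\ge T(x)$; equivalently $M(t,x,y)=C(x\mid y)$ iff $t\ge T(x)$.

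Next I would carry out the reduction. Suppose, for contradiction, there is an algorithm $D(x,t)$ that decides whether $M(t,x,y)=C(x\mid y)$. Define an algorithm $E$ that, on input $x$, runs $D(x,1),D(x,2),\dots$ in turn and, at the first $t$ for which $D(x,t)$ answers ``yes'', outputs the (computable) value $M(t,x,y)=\phi(t,x,y)$. By the eventual-constancy observation, $D(x,t)$ answers ``yes'' for all $t\ge T(x)$, so $E$ halts on every input; by correctness of $D$, the first ``yes'' occurs precisely at the step $t=T(x)$ where $\phi$ first equals $C(x\mid y)$, so $E(x)=C(x\mid y)$. Hence $x\mapsto C(x\mid y)$ would be total recursive.

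Finally I would derive the contradiction from uncomputability. For any fixed string $y$ — in particular for the postulated $y=y^*$ — the map $x\mapsto C(x\mid y)$ is not total recursive: by a Berry/Chaitin-style argument, if it were, then for each $n$ we could compute the first string $x_n$ in length-lexicographic order with $C(x_n\mid y)\ge n$; but $x_n$ is produced from $y$ together with a description of size $O(\log n)$ (encoding $n$ and the fixed search procedure), so $C(x_n\mid y)=O(\log n)$, contradicting $C(x_n\mid y)\ge n$ for large $n$. Since $y$ enters only as a free input, holding it fixed does not affect the argument. Therefore no decider $D$ exists, which is exactly the assertion of \Cref{cor:3}.

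The main obstacle, I expect, is not the reduction itself — which is routine — but two points that must be nailed down for it to be valid. First, that $\phi$ (hence $M$) is genuinely integer-valued and monotone in $t$, so that ``eventually constant'' really holds and the search in $E$ terminates; this has to be read off from the explicit construction behind \Cref{the:1}. Second, and more delicate, that fixing the condition to the enormously powerful hypothetical model $y^*$ does not secretly make $C(\cdot\mid y^*)$ computable; it does not, because $y^*$ is still a single finite string entering the diagonalization only as a cost-free input, but this is the step where one must be careful, since the surrounding narrative treats $y^*$ as qualitatively special.
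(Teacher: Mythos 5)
Your proof is correct and takes essentially the same route as the paper's: the paper proves \Cref{cor:3} with a single sentence noting that such a decider would render $C(x\mid y)$ computable, and your proposal simply fills in that reduction (via the integer-valued, non-increasing, eventually constant behavior of $\phi(t,x,y)$) together with the standard Berry-style argument that $C(\cdot\mid y)$ is uncomputable for any fixed $y$. The added detail is sound but does not constitute a different approach.
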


If we could determine this, we would conclude that $C(x \mid y)$ is computable.

\begin{corollary}\label{cor:4}
    For any $\epsilon > 0$ and each $t$, there exist infinite $x$ such that 
    there is always an $\epsilon$ error between $M(t, x, y)$ and $C(x \mid y)$.
\end{corollary}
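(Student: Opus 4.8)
The plan is to argue by contradiction, reducing the claim to the classical fact that $C(\cdot\mid y)$ admits no total computable approximation up to an additive constant. Suppose the conclusion fails: there are $\epsilon>0$ and a fixed $t$ such that $|M(t,x,y)-C(x\mid y)|\geq\epsilon$ holds for only finitely many strings $x$; call this finite set $E$. By \Cref{the:1} and \Cref{cor:2}, for fixed $t$ the map $x\mapsto M(t,x,y)$ is total recursive, and since $M(t,x,y)$ is non-increasing in $t$ with limit $C(x\mid y)$ we have $M(t,x,y)\geq C(x\mid y)$ for every $x$. Define $g(x)=C(x\mid y)$ for $x\in E$ and $g(x)=M(t,x,y)$ otherwise; because $E$ is finite, the finitely many values $\{C(x\mid y):x\in E\}$ can be hard-coded, so $g$ is total recursive, and by construction $0\leq g(x)-C(x\mid y)<\epsilon$ for all $x$. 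Setting $c:=\lceil\epsilon\rceil$ gives $C(x\mid y)\leq g(x)\leq C(x\mid y)+c$ for every $x$.

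Next I would run a Berry-type self-reference argument against $g$. Since there are at most $2^{n+1}-1$ programs of length $\leq n$, the set $\{x:C(x\mid y)\leq n\}$ is finite for every $n$, so $C(\cdot\mid y)$ — and hence $g$, which differs from it by at most $c$ — is unbounded. Consider the algorithm that, given a natural number $m$ as program input (with $y$ as the conditional input to $U$), enumerates strings in length-lexicographic order and outputs the first $x$ with $g(x)>m$; it halts because $g$ is unbounded and is correct because $g$ is total recursive. Encoding $m$ self-delimitingly costs $2l(m)+O(1)$ bits, where the $O(1)$ depends only on $y$, $t$, and the fixed description of $g$, so the output $x_m$ satisfies $C(x_m\mid y)\leq 2l(m)+O(1)$. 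But $g(x_m)>m$ forces $C(x_m\mid y)\geq g(x_m)-c>m-c$, so $m-c<2l(m)+O(1)$, which fails for all sufficiently large $m$ — a contradiction. Hence for every $\epsilon>0$ and every $t$ there are infinitely many $x$ with $|M(t,x,y)-C(x\mid y)|\geq\epsilon$; in fact, since $M(t,x,y)\geq C(x\mid y)$, these $x$ satisfy the one-sided bound $M(t,x,y)-C(x\mid y)\geq\epsilon$.

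The only delicate points are (i) that patching $M(t,\cdot,y)$ on the finite set $E$ preserves total recursiveness — immediate, since only finitely many values change, each to a fixed constant — and (ii) the bookkeeping of the encoding length of $m$, which is why the bound reads $2l(m)+O(1)$ rather than $l(m)+O(1)$; neither affects the conclusion. I expect the main conceptual obstacle to be the reduction itself: making precise that "$M(t,\cdot,y)$ lies within $\epsilon$ of $C(\cdot\mid y)$ for all but finitely many $x$" genuinely yields a total computable $c$-approximation of $C(\cdot\mid y)$ on \emph{all} inputs, since this is the step that upgrades the asymptotic convergence of \Cref{the:1} into the sharp uncomputability obstruction. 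Once that reduction is in place, the contradiction is the classical Berry-paradox argument.
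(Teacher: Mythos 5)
The paper states \Cref{cor:4} without any accompanying argument: unlike \Cref{cor:3} (which gets a one-line justification) and the discussion following \Cref{cor:5} (the Subset-Sum reduction), nothing in the main text or the appendix proves \Cref{cor:4}. Your proposal therefore cannot be compared against a paper proof — instead it supplies the missing one, and it is correct.

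Your argument is sound and classical. The reduction step is the crux: from "only finitely many $x$ have $|M(t,x,y)-C(x\mid y)|\ge\epsilon$" you build a \emph{total} computable $g$ with $C(x\mid y)\le g(x)\le C(x\mid y)+c$ for \emph{all} $x$ by hard-coding the finitely many exceptional values, and then a Berry-paradox search against $g$ produces strings $x_m$ with $C(x_m\mid y)\le 2l(m)+O(1)$ yet $C(x_m\mid y)>m-c$, which is impossible for large $m$. Two details you rely on are correctly justified from the paper's own setup: (i) $M(t,\cdot,y)$ is total recursive for fixed $t$ because it simulates $\phi(t,\cdot,y)$ from the proof of \Cref{the:1}, which explicitly constructs a total recursive function; and (ii) the one-sided inequality $M(t,x,y)\ge C(x\mid y)$ follows because $\phi(t,x,y)$ is monotonically non-increasing in $t$ with limit $C(x\mid y)$. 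One stylistic remark: since $M$ and $C(\cdot\mid y)$ are integer-valued, the claim for $0<\epsilon\le 1$ reduces to "$M(t,x,y)\ne C(x\mid y)$ for infinitely many $x$," and your uniform treatment via $c=\lceil\epsilon\rceil$ handles this cleanly while also establishing the stronger fact that the approximation error is actually unbounded, not merely $\ge\epsilon$ — slightly more than the corollary asks, at no extra cost.
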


\Cref{cor:2} tells us that theoretically, we can construct a series of decoder-only transformer 
to gradually approximate $C(x \mid y)$ (i.e., AGI). 
However, \Cref{cor:3} and \Cref{cor:4} indicate that in practice, 
we cannot precisely determine how close we are to $C(x \mid y)$. 
Particularly, \Cref{cor:4} further shows that there will always be some $x$ that 
have a certain error with $C(x \mid y)$. 
Unfortunately, we currently cannot determine whether these $x$ with errors are indeed $x$ that 
we care about (since some $x$ might be completely meaningless random strings). 
However, When we use practically applied decoder-only transformer
$m(x)$ to approximate $C(x \mid y)$, the following interesting corollaries arise.

\begin{corollary}\label{cor:5}
    For a practically applied decoder-only transformer $m(x)$, there exist meaningful $x$ such that $m(x) \neq C(x \mid y)$.
\end{corollary}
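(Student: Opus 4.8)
The plan is to reduce this statement to Corollary~\ref{cor:4}, which already guarantees that for any $\epsilon>0$ and each fixed $t$ there are infinitely many $x$ with $|M(t,x,y)-C(x\mid y)|\ge\epsilon$. A ``practically applied'' decoder-only transformer $m(x)$ differs from the idealized family $M(t,x,y)$ in two respects: (i) it runs for a bounded number of execution steps $t=T$ (finite inference budget, finite context length), and (ii) it operates under finite, not infinite, numerical precision. The key observation is that both of these restrictions only make the approximation \emph{worse}, never perfect on the hard instances: the $x$ produced by Corollary~\ref{cor:4} for the value $t=T$ are precisely instances on which $M(T,x,y)\ne C(x\mid y)$, and $m$ inherits this gap because it is (a truncation/rounding of) $M(T,\cdot,y)$.

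Concretely, the steps I would carry out are: First, fix $T$ to be the effective step bound of $m$ (the maximum number of Turing-machine steps $m$ can simulate given its depth, context window, and the $t$ encoded in its input). Second, invoke Corollary~\ref{cor:4} with, say, $\epsilon=1$ and this $t=T$ to obtain an infinite set $\mathcal{X}$ of strings $x$ with $|M(T,x,y)-C(x\mid y)|\ge 1$. Third, argue that on each such $x$ we have $m(x)\ne C(x\mid y)$: since $C(x\mid y)$ is a nonnegative integer and $M(T,x,y)$ by construction (from Theorem~\ref{the:1}, $\phi$ is monotone decreasing and integer-valued, upper-bounding $C(x\mid y)$) already satisfies $M(T,x,y)>C(x\mid y)$, and $m$ computes a value no smaller than $C(x\mid y)$ — it is running a search that has not yet found the shortest program — we get $m(x)\ge M(T,x,y)>C(x\mid y)$, hence strict inequality. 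Fourth, observe that among the infinitely many $x\in\mathcal{X}$ not all can be ``meaningless random strings'': by a counting argument, the set of $x$ that are incompressible relative to $y$ (i.e.\ essentially random) has density zero among all strings of a given length, so $\mathcal{X}$ necessarily contains meaningful $x$ — for instance valid prompt–response pairs $(k,r)$ — on which $m(x)\ne C(x\mid y)$.

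The main obstacle is step three, making precise the claim that $m(x)\ge C(x\mid y)$ always, i.e.\ that a finite-budget, finite-precision transformer can never \emph{under}-estimate the conditional Kolmogorov complexity on these instances. This requires pinning down exactly what quantity $m(x)$ is meant to output and arguing the ``search never completes early'' property survives rounding; if finite precision could by coincidence round $M(T,x,y)$ down onto the value $C(x\mid y)$, the argument would need the gap $\ge 1$ from Corollary~\ref{cor:4} to dominate the rounding error, which is why choosing $\epsilon=1$ (an integer-sized gap) rather than a tiny $\epsilon$ is essential. A secondary, softer obstacle is the informal notion of ``meaningful'': I would phrase step four as an existence claim backed by the incompressibility counting bound rather than attempt a formal definition of meaningfulness, in keeping with the paper's conjectural framing of AGI-relevant $x$.
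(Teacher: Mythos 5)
Your proof takes a genuinely different route from the paper's, and unfortunately it has gaps that I think are fatal. The paper's actual argument is a concrete complexity-theoretic one: it assumes $NP\neq P$, constructs $x=(k,r)$ as an instance of the Subset Sum Problem (the prompt asks, for each of $n$ integer sets, whether some subset sums to 42, and the response is the sequence of yes/no answers), observes that a practically applied decoder-only transformer $m$ is a polynomial-time Turing machine, and concludes that if $m(x)=C(x\mid y)$ held for all such $x$ then $m$ would be a polynomial-time algorithm for an $NP$-complete problem, a contradiction. The ``meaningfulness'' of the bad $x$ is therefore exhibited, not inferred: they are well-posed NP-complete instances. Your proposal instead tries to reduce to \Cref{cor:4}, but \Cref{cor:4} is a statement about the idealized family $M(t,x,y)$ built (under infinite precision and hardmax) to simulate the dovetailing search $\phi$; a practically applied, finitely-trained $m$ is not a member of that family, so there is no $T$ for which $m=M(T,\cdot,y)$, and the reduction does not get off the ground.

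Two further steps of your argument fail on their own terms. In step three you need $m(x)\ge C(x\mid y)$ (``the search never completes early''), but this is a property of the specific construction $\phi$ in \Cref{the:1}, not of an arbitrary trained transformer; nothing prevents a practical $m$ from outputting a value below $C(x\mid y)$, so strictness cannot be concluded this way. In step four the counting argument is reversed: the standard incompressibility bound shows that at most a $2^{-k}$ fraction of length-$n$ strings satisfy $C(x)<n-k$, so the incompressible (``essentially random'') strings have density approaching one, not zero. Hence you cannot conclude that the infinite bad set $\mathcal{X}$ must contain compressible, let alone ``meaningful,'' strings by counting alone --- and even if it did, compressibility is not the paper's notion of meaningfulness (prompt--response pairs one actually cares about). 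The clean way to get meaningful witnesses is exactly what the paper does: exhibit them explicitly as instances of a problem outside $P$.
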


Note that in the following proof, we assume $NP \neq P$. In fact, even without this assumption, 
as long as we can construct a problem that can not be solved in polynomial time, 
the conclusion still holds.

Suppose string $(k, r) = x$ is as follows:
\begin{tcolorbox}[arc=10pt]
\texttt{\textbf{$k$}: Given $n$ sets \{n sets\}, for each set, does there exist a subset whose sum equals exactly 42?}

\texttt{\textbf{$r$}: yes,no,no,yes,no......}
\end{tcolorbox}
\noindent where \{n sets\} are $n$ integer sets, and $c$ is a string of length $n$ composed 
of "yes" or "no", determined by the content of \{n sets\}. Obviously, $x$ represents 
the Subset Sum Problem, which is a NP-complete problem. Since we assume $NP \neq P$, 
this means there exists no polynomial-time Turing machine that can solve this problem.

For a decoder-only transformer, $m(x)$ is a Turing machine with polynomial-time complexity. 
Assuming for any $x$ (here referring to the use of different sets of integers \{n sets\}), if 
$m(x)=C(x\mid y)$, then we have found a Turing machine $m(x)$ with polynomial-time complexity that 
solves the Subset Sum Problem, leading to a contradiction. Therefore, there exist some $x$ for which 
$m(x)\neq C(x\mid y)$.

As shown in \Cref{the:1}, in order to better approximate the conditional Kolmogorov complexity 
$C(x\mid y)$, we need to find Turing machines that execute more steps. In decoder-only transformer, 
there are two ways to increase execution steps, they are illustrated in \Cref{fig:picture1}:

\begin{itemize}[topsep=2pt, itemsep=0pt, leftmargin=12pt]
    \item Increase the model's parameter count: As the model's parameter count increases, under the same $l(x)$, execution steps becomes larger.
    \item Increase the encoding process steps: For example, introduce intermediate reasoning path $m$ in $x = (k, r)$, so that the encoding of the first character of $r$ is not generated directly from $k$, but from $(k, m)$.
\end{itemize}

It is easy to observe that the first method corresponds to the pre-training Scaling Laws 
while the second method corresponds to the inference Scaling Laws. 
The second method has multiple implementations, such as chain-of-thought \cite{20}, 
designing complex agent systems based on LLMs \cite{21}, adding a "wait" command to extend the 
model's thinking time \cite{25,26}, and leveraging advanced models like OpenAI's o1 \cite{17} 
and DeepSeek's R1 \cite{27}, among others.
These can all be considered implementations of the second method. 
They expand the value of execution steps by increasing the number of intermediate tokens. 

Finally, we raise a question: Are RNN-type linear structures, 
such as Mamba \cite{22} and RWKV \cite{23}, truly efficient? 
Despite the decoder-only transformer's polynomial-time complexity, its complexity 
does accelerate the approximation of conditional Kolmogorov complexity. 
In contrast, Mamba and RWKV may be less efficient in approximating conditional Kolmogorov complexity.
\vspace{-0.2cm}
\section{Conclusion}
\vspace{-0.2cm}
\label{sec:conclusion}
In this paper, we first demonstrate that the pretraining process of LLMs is a computable 
(total recursive) approximation of the upper bound of joint Kolmogorov complexity. 
Optimizing joint Kolmogorov complexity inherently carries a regularization effect. 
Next, we prove that, under the condition of infinite-precision rational numbers and hardmax attention mechanisms, 
there exists a series of decoder-only transformer such that $\displaystyle \lim_{t \to \infty} M(t, x ,y) = C(x\mid y)$. 
Theoretically, to better approximate $C(x\mid y)$, we need to increase execution steps in $M(t, x, y)$. 
The two types of Scaling Laws represent different methods of increasing execution steps.

\section{Limitations}
\label{sec:limitations}

Our analysis is subject to several notable limitations. First, the existence of 
$y^*$ in \Cref{eq:12} remains unproven, which poses a challenge to the theoretical foundation of our approach.
Second, the current training methodologies for LLMs may not sufficiently approximate the theoretical construct $M(t,x,y)$. 
Lastly, the realization of $M(t,x,y)$ depends on infinite-precision rational numbers and hardmax attention mechanisms.
\clearpage
\bibliography{refer}

\clearpage
\appendix

\section{Example of Arithmetic Coding}
\label{sec:example_of_arithmetic}

Consider a symbol set $\mathcal{S} = \{a,b,c,d,e\}$, and we need to encode the symbol sequence "bab".

Let $p_1 = (\frac{3}{11},\frac{3}{11},\frac{2}{11},\frac{1}{11},\frac{2}{11})$, which divides the $[0,1]$ interval proportionally.

\begin{center}
    \includegraphics[width=0.5\textwidth]{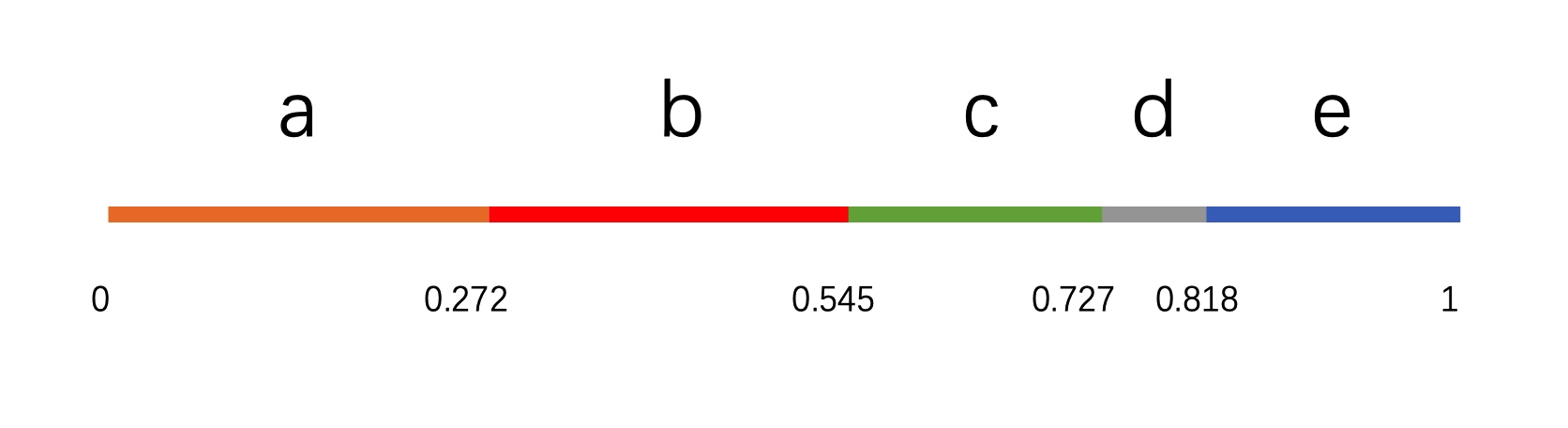}
\end{center}

The first character of the message is $b$, so we select the second segment from the figure above. Assuming $p_2 = (0.1,0.1,0.3,0.3,0.2)$, divide the interval $[0.272,0.545]$ proportionally:

\begin{center}
    \includegraphics[width=0.5\textwidth]{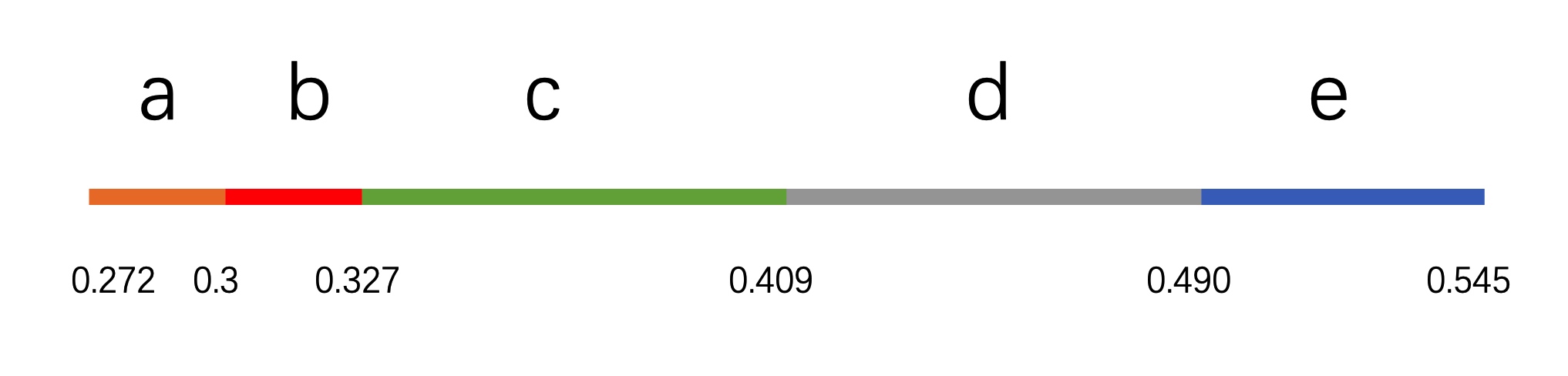}
\end{center}

The second character is $a$, so we select the first segment from the above figure. Assuming $p_3 = (0.2,0.2,0.2,0.2,0.2)$, divide the interval $[0.272,0.3]$ proportionally:

\begin{center}
    \includegraphics[width=0.5\textwidth]{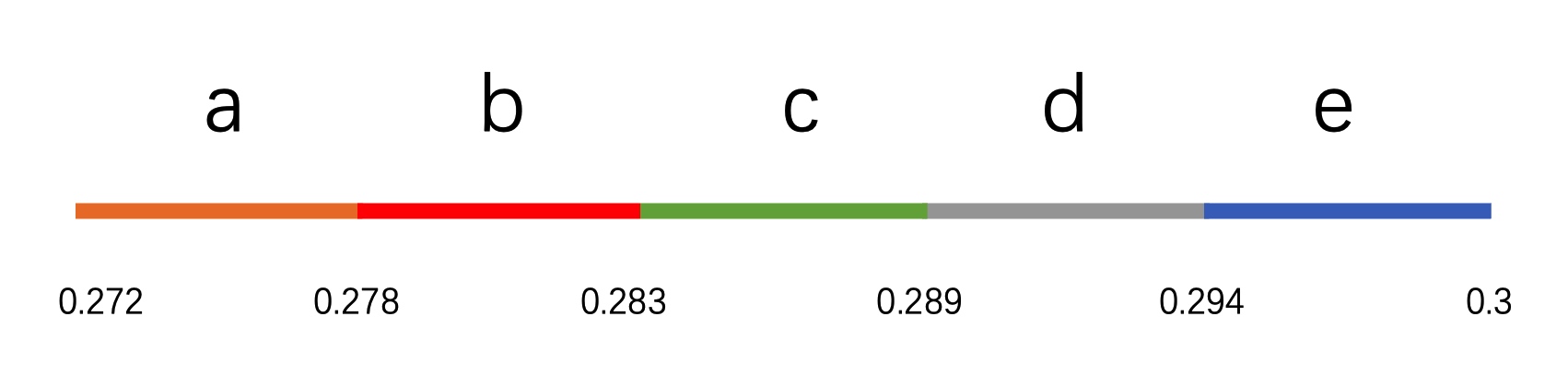}
\end{center}

The last character is $b$, taking out the second segment from above. Within this interval, we find a decimal with the shortest binary representation.
\begin{equation*}
(0.01001)_2 = (0.28125)_{10}
\end{equation*}
The decoding process reverses these operations using the same probability distributions to restore the original symbol sequence. Note that for decoding, besides the arithmetic coding information $(0.01001)_2$, we also need to know $p_1,p_2,p_3$ and the number of decoding iterations.

\section{Proof of Theorem}
\label{sec:proof_of_theorem}

\subsection{Proof of Theorem 4.1}
\label{subsec:proof_of_theorem_41}

\renewcommand{\thetheorem}{4.1}
\begin{theorem}
    Given a universal Turing machine $U$, the joint Kolmogorov complexity $C(x, y)$ of strings $x$ and $y$ satisfies the following inequality:
    \begin{align}
    C(x,y) \leq & C(y) + C(x \mid y)\nonumber \\ 
          & +2l(C(y)) +O(1)
    \end{align} 
    where $l$ represents string length, and $O(1)$ is a constant related to Turing machine $U$.
\end{theorem}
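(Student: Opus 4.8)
The plan is to exhibit an explicit self-delimiting program that outputs the pair $\langle x,y\rangle$ and whose length is bounded by the right-hand side; the claimed inequality then follows from the definition of $C(x,y)$ together with the invariance theorem, which absorbs the dependence on the auxiliary machine into the $O(1)$ term. First I would fix optimal witnesses: let $p$ be a shortest program with $U(p)=y$, so $l(p)=C(y)$, and let $q$ be a shortest program with $U(q,y)=x$, so $l(q)=C(x\mid y)$. The naive construction runs $p$ to recover $y$ and then runs $q$ on input $y$ to recover $x$; the obstacle is that the plain concatenation $pq$ is \emph{not} uniquely decodable — the machine cannot locate the boundary between $p$ and $q$.

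To fix this I would prepend a self-delimiting description of the number $l(p)=C(y)$. Write $n=C(y)$ in binary, a string of length $l(C(y))$; encode it by duplicating every bit and terminating with the pattern $01$, producing a prefix-free codeword $\bar n$ of length $2\,l(C(y))+2$. The full program is the concatenation $\bar n\,p\,q$. Next I would describe the decoder: a fixed machine $V$ that, on input $\bar n\,p\,q$, scans the self-delimiting prefix from the left to read off $n$, reads the next $n$ bits as $p$ and simulates $U$ to get $y=U(p)$, treats the remaining suffix as $q$ and simulates $U$ to get $x=U(q,y)$, and finally prints the encoded pair $\langle x,y\rangle$. By construction $V$ outputs $\langle x,y\rangle$, and the input length is $l(\bar n)+l(p)+l(q)=C(y)+C(x\mid y)+2\,l(C(y))+2$.

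Finally I would invoke the invariance theorem: there is a constant $c_V$ depending only on $U$ and $V$ with $C_U(z)\le C_V(z)+c_V$ for every string $z$. Applying this with $z=\langle x,y\rangle$ and using the program above as an upper bound for $C_V(\langle x,y\rangle)$ yields
\begin{equation*}
C(x,y)\;\le\; C(y)+C(x\mid y)+2\,l(C(y))+O(1),
\end{equation*}
which is the statement.

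The main obstacle is the self-delimiting step: the length prefix must simultaneously be prefix-free (so that the $p$/$q$ split is unambiguous), cheap (cost $2\,l(C(y))+O(1)$ rather than something proportional to $C(y)$ itself), and parseable left-to-right by the decoder. The "double each bit, terminate with $01$" code meets all three requirements; once that is in place, the remaining pieces — constructing $V$ and quoting the invariance theorem for the additive constant — are routine.
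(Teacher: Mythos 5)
Your argument is correct and follows the same route as the paper's proof: both fix shortest programs $p$ for $y$ and $q$ for $x$ given $y$, prepend a self-delimiting encoding of $l(p)=C(y)$ so the decoder can locate the $p/q$ boundary, and then invoke the invariance theorem to absorb the auxiliary decoder into the $O(1)$ term. The only cosmetic difference is the choice of prefix-free code for the length (you double each bit and terminate with $01$, while the paper uses $\overline{z}=1^{l(z)}0z$); both cost $2\,l(C(y))+O(1)$, so the resulting bound is identical.
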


\begin{proof}
    Let $p$ be the shortest program describing $y$, and $q$ be the shortest program describing $x$ given $y$. We can construct the following Turing machine: first use $p$ to describe $y$, then mark the end position of $p$ in the Turing machine with the encoding length $l(p)$, then use program $q$ to describe $x$ based on the previously given $y$. Therefore, exsit a program $\overline{l(p)}pq$ to describe $\langle x,y\rangle$, where $\overline{l(p)}$ represents a prefix code($\overline{x} = 1^{l(x)}0x$). Finally, according to the invariance theorem of Kolmogorov complexity $C(x,y) \leq C(y) + C(x \mid y) + 2l(C(y))+O(1)$.
\end{proof}

\subsection{Proof of Theorem 4.2}
\label{subsec:proof_of_theorem_42}

\renewcommand{\thetheorem}{4.2}
\begin{theorem}
    For each $x$, there exists a total recursive function $\phi(t,x,y)$ that is monotonically decreasing with $t$, such that $\displaystyle \lim_{t \to \infty} \phi(t,x,y)= C(x\mid y)$.
\end{theorem}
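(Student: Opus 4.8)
The plan is to exploit the classical fact that conditional Kolmogorov complexity, while not computable, is \emph{approximable from above}: one can effectively enumerate shorter and shorter programs that produce $x$ from $y$, and the running minimum of their lengths converges to $C(x\mid y)$ from above. Concretely, I would fix the universal Turing machine $U$ used in the definition of $C(\cdot\mid\cdot)$ and define, for each step budget $t\in\mathbb{N}$,
\begin{equation*}
\phi(t,x,y)\;=\;\min\Big(\{\,l(x)+c\,\}\;\cup\;\{\,l(p):l(p)\le t,\ U(p,y)\ \text{halts within}\ t\ \text{steps with output}\ x\,\}\Big),
\end{equation*}
where $c$ is a fixed constant depending only on $U$ such that the ``ignore $y$, print $x$ verbatim'' program has length at most $l(x)+c$. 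Including this trivial bound guarantees the set on the right is nonempty and that $\phi$ is finite for every $t$.

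Then I would verify the three required properties in turn. \textbf{Totality and computability:} for a fixed $t$ there are fewer than $2^{t+1}$ programs of length $\le t$, each is simulated for at most $t$ steps, and testing whether a halted computation has output exactly $x$ is a finite comparison; hence the minimization ranges over a finite, effectively constructible set, so $\phi$ is total recursive. \textbf{Monotonicity:} increasing $t$ enlarges both the length bound $l(p)\le t$ and the time bound, so the set over which the minimum is taken only grows; therefore $\phi(t+1,x,y)\le\phi(t,x,y)$, i.e.\ $\phi$ is non-increasing in $t$, which is the intended sense of ``monotonically decreasing'' here. \textbf{Convergence:} by definition of $C(x\mid y)$ there is a program $p^\star$ with $l(p^\star)=C(x\mid y)$ and $U(p^\star,y)=x$, and since this computation halts it halts in some finite number $T^\star$ of steps; for every $t\ge\max(l(p^\star),T^\star)$ the program $p^\star$ is eligible, so $\phi(t,x,y)\le C(x\mid y)$. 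Conversely every eligible program $p$ satisfies $U(p,y)=x$ and hence $l(p)\ge C(x\mid y)$, while $l(x)+c\ge C(x\mid y)$; thus $\phi(t,x,y)\ge C(x\mid y)$ for all $t$. Combining, $\phi(t,x,y)=C(x\mid y)$ for all sufficiently large $t$, so $\lim_{t\to\infty}\phi(t,x,y)=C(x\mid y)$.

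There is no deep obstacle here — this is the standard upper-semicomputability argument for Kolmogorov complexity — so I expect the only real care to be bookkeeping: one must cap each simulation at $t$ steps rather than run $U$ to completion (which would be undecidable) in order to keep $\phi$ total, and one must carry the trivial upper bound $l(x)+c$ both so that $\phi$ is well defined before any short witnessing program has been discovered and so that $\phi$ never dips below $C(x\mid y)$. Once $\phi$ is constructed, \Cref{cor:2} follows by invoking the Turing-completeness of decoder-only transformers under infinite-precision rationals and hardmax attention to build a series $M(t,x,y)$ that simulates $\phi(t,x,y)$.
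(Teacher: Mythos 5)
Your proposal is correct and is essentially the same construction as the paper's: both exploit the upper-semicomputability of conditional Kolmogorov complexity by dovetailing simulations of candidate programs for $t$ steps, taking the running minimum of witness lengths, and falling back on the trivial bound $l(x)+c$ so that $\phi$ is total and never dips below $C(x\mid y)$. The only cosmetic difference is that you additionally cap candidate programs by $l(p)\le t$, whereas the paper fixes the search to all programs of length at most $l(x)+c$ from the start; both variants are finite searches for each $t$ and yield the same limit.
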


\begin{proof}
    Select a universal Turing machine $U$. For each $x$, the length of its shortest program is at most $l(x)+c$. We construct $\phi(t,x,y)$ as follows. On $U$, execute each program $p$ of length at most $l(x)+c$ with input $y$ for $t$ steps. If we can find some $p$ that halts and outputs $x$, then define $\phi(t,x,y)$ as the length of the shortest such $p$; otherwise, let it equal $l(x)+c$. Clearly, $\phi(t,x,y)$ is a total recursive function and monotonically decreases with $t$. Its limit exists because for each $x$, there exists a $t$ such that executing input $p$ and $y$ on $U$ for $t$ steps will output $x$, at which point $l(p) = C(x\mid y)$.
\end{proof}

\end{document}